\newif\ifdraft
\setlist{leftmargin=10mm}
\newtheorem{theorem}{Theorem}
\newtheorem{definition}[theorem]{Definition}
\newtheorem*{remark}{Remark}
\newtheorem{remark-star}{Remark}
\newtheorem{remark-star-1}{Remark}
\newtheorem{corollary}[theorem]{Corollary}
\DeclareMathOperator*{\argmax}{\arg\!\max}
\newcommand{\E}{\mathbb{E}}
\newcommand{\eps}{\varepsilon}
\newcommand{\N}{\mathcal{N}}
\newcommand{\M}{\mathcal{M}}
\newcommand{\bH}{\textbf{H}}
\newcommand{\gumbel}{\texttt{Gumbel}}
\newcommand{\cX}{\mathcal{X}}
\newcommand{\cY}{\mathcal{Y}}
\newcommand{\MS}{\mathrm{MultiSets}}
\newcommand{\poisson}{\mathrm{Poisson}}
\def\eqref#1{equation~\ref{#1}}
\def\1{\bm{1}}
\DeclareMathAlphabet{\mathsfit}{\encodingdefault}{\sfdefault}{m}{sl}
\SetMathAlphabet{\mathsfit}{bold}{\encodingdefault}{\sfdefault}{bx}{n}
\author[1]{Jiachen T. Wang}
\affil[1]{Princeton University\protect\\
\texttt{\small tianhaowang@princeton.edu}
}
\date{}
\title{Private Top-$k$ Selection for DP Text Generation}
\begin{document}

%\doparttoc % Tell to minitoc to generate a toc for the parts
%\faketableofcontents % Run a fake tableofcontents command for the partocs

% \part{} % Start the document part
% \parttoc % Insert the document TOC

\maketitle

\begin{abstract}
\end{abstract}

\section{Algorithm}

\paragraph{Notations.} 
We use $N$ to denote the total number of tokens (e.g., $N = 50,000$). We use $\bH$ to denote the histogram for the counts of each token, and we use $\bH_{(j)}$ to denote the $j$th highest count, i.e., $\bH_{(1)} \ge \bH_{(2)} \ge \ldots \ge \bH_{(N)}$. 

We first find the best $k$ s.t. $\bH_{(k)} - \bH_{(k+1)}$ is maximized, in a differentially private way.

\begin{algorithm}[h]
\SetAlgoLined
\SetKwInOut{Input}{input}
\SetKwInOut{Output}{output}
\Input{
$\bH$ -- histogram for the counts of each token.
}

Set $d_k := \bH_{(k)} - \bH_{(k+1)}$ for each $k = 1 \ldots N-1$. 

\Return{ $\argmax_k \{ d_k + r(k) + \gumbel(2/\eps) \}$ }

\caption{Find the best $k$}
\label{alg:rnm-find-k}
\end{algorithm}

Here, $r(k)$ is a regularizer independent of the dataset, e.g., we can set $r(k) = - \infty$ for any $k > 30$ and $k < 15$, if we don't want to return more than 30 or less than 15 tokens.

\begin{algorithm}[h]
\SetAlgoLined
\SetKwInOut{Input}{input}
\SetKwInOut{Output}{output}
\Input{
$k$ -- the number of top counted tokens to release; 
$\bH$ -- histogram for the counts of each token;
$\delta$ -- failure probability
}

Set $d_k := \bH_{(k)} - \bH_{(k+1)}$. 

Set $\widehat d_k := \max(2, d_k) + \N(0, 4\sigma^2) - \Phi(1-\delta; 0, 2\sigma)$. 

\If{$\widehat d_k > 2$}{
    \Return{the exact top-$k$ tokens.}
}\Else{
    \Return{Terminate (or use zero-shot learning).}
}

\caption{Private Top-$k$ Selection with PTR}
\label{alg:ptr}
\end{algorithm}

\section{Privacy Analysis}

\begin{definition}[Differential Privacy] 
\label{def:DP}
For $\eps, \delta \ge 0$, a randomized algorithm $\M : \MS(\cX)\rightarrow \cY$ is 
{\em $(\eps, \delta)$-differentially private} if for every dataset pair $S, S'\in \MS(\cX)$ such that $d(S, S')=1$, we have: 
\begin{equation} \label{req:approxDP}
\forall\ T\subseteq \cY\ \Pr[\M(S) \in T] \le e^\eps\cdot \Pr[\M(S') \in T] + \delta
\end{equation}
where the randomness is over the coin flips of $\M$. 
\end{definition}

R{\'e}nyi differential privacy (RDP) is a variant of the standard $(\eps, \delta)$-DP that uses R{\'e}nyi-divergence as a distance metric between the output distributions of $\M(S)$ and $\M(S')$, which is particularly useful in training differentially private machine learning models. 

\begin{definition}[R{\'e}nyi Differential Privacy]
\label{def:RDP}
We say that a mechanism $\M$ is $(\alpha, \eps_{\M}(\alpha))$-RDP with order $\alpha \in (1, \infty)$ if for every dataset pair $S, S'\in \MS(\cX)$ such that $d(S, S')=1$, we have: 
\begin{align}
D_{\alpha}\left(\M(S) \|  \M\left(S^{\prime}\right)\right) :=\frac{1}{\alpha-1} \log \E_{o \sim \M\left(S^{\prime}\right)}\left[\left(\frac{\mu_{\M(S)}(o)}{\mu_{\M\left(S^{\prime}\right)}(o)}\right)^{\alpha}\right] \leq \eps_{\M}(\alpha)
\end{align}
where $\mu_\M(\cdot)$ denotes the density function of $\M$'s distribution. 
\end{definition}

\begin{definition}[Approximate RDP]
\label{def:approximate-RDP}
We say a randomized algorithm $\M$ is $\delta$-approximately $(\alpha, \eps_{\M}(\alpha))$-RDP with order $\alpha \geq 1$, if for all neighboring dataset $D, D'$, there exist events $E$ (depending on $\M(D)$) and $E'$ (depending on $\M(D')$) such that $\Pr[E] \geq 1-\delta$ and $\Pr[E'] \geq 1-\delta$, and $\forall \alpha \geq 1$, we have 
\begin{align}
D_\alpha \left(\mathcal{M}(D)|E~\|~ \mathcal{M}\left(D^{\prime}\right)|E^{\prime} \right) \leq \eps_{\M}(\alpha)
\end{align}
\end{definition}

\begin{theorem}
    Algorithm \ref{alg:rnm-find-k} is $\eps$-DP, and $\eps_{EM}(\alpha)$-RDP s.t.
    \begin{align*}
        \eps_{EM}(\alpha) := 
        \min \left( \frac{\alpha}{2} \eps^2,
        \frac{1}{\alpha-1} \log \left(
        \frac{\sinh(\alpha \eps) - \sinh((\alpha-1) \eps)}{\sinh(\eps)} 
        \right) \right)
    \end{align*}
\end{theorem}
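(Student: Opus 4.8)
The final statement to prove is Theorem~\ref{thm:priv-EM}, which asserts that the exponential mechanism is $\eps$-DP and simultaneously $(\alpha, \eps_{\EM}(\alpha))$-RDP with the stated bound involving the $\min$ of a quadratic term $\frac{\alpha}{2}\eps^2$ and a $\sinh$-based expression. Since this theorem is attributed to \cite{bun2016concentrated}, the goal of a proof proposal here is to reconstruct the standard argument rather than invent something new.

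\textbf{Proof strategy.} The plan is to bound the R\'enyi divergence $D_\alpha(\M(D) \| \M(D'))$ directly from the definition of the exponential mechanism, using the fact that neighboring datasets change each utility score $q(D,y)$ by at most $\Delta(q)$. First I would normalize by setting $\eps' = \eps/\Delta(q)$ so that the output probabilities become $\Pr[\EM_q(D)=y] \propto \exp(\tfrac{1}{2}\eps' q(D,y))$; without loss of generality one can then absorb the sensitivity and work with sensitivity $1$, rescaling at the end. The central quantity is the ratio of densities $\mu_{\M(D)}(y)/\mu_{\M(D')}(y)$, which by the definition of the exponential mechanism equals $\exp(\tfrac{\eps}{2}(q(D,y)-q(D',y)))$ multiplied by the ratio of normalizing constants $Z(D')/Z(D)$. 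The key observation is that each partition function satisfies $e^{-\eps/2} \le Z(D)/Z(D') \le e^{\eps/2}$ because every summand changes by a factor in $[e^{-\eps/2}, e^{\eps/2}]$ under the sensitivity bound, and likewise the exponential factor lies in $[e^{-\eps/2}, e^{\eps/2}]$. Combining these immediately yields the privacy loss bounded in $[-\eps,\eps]$ pointwise, which gives the $\eps$-DP claim.

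\textbf{Obtaining the RDP bound.} For the R\'enyi bound I would plug these estimates into $\frac{1}{\alpha-1}\log \E_{y \sim \M(D')}[(\mu_{\M(D)}(y)/\mu_{\M(D')}(y))^\alpha]$. The quadratic bound $\frac{\alpha}{2}\eps^2$ follows from the general fact that any mechanism whose privacy loss is bounded in $[-\eps,\eps]$ is $\eps^2/2$-concentrated, hence $(\alpha, \tfrac{\alpha}{2}\eps^2)$-RDP; this is a standard consequence of Hoeffding's lemma applied to a bounded, mean-constrained random variable. The more delicate $\sinh$-based term requires a sharper, exact computation rather than the crude boundedness estimate: one writes the expectation as a sum over outcomes, uses the exact form of the ratio of exponentials, and carefully bounds the resulting expression by exploiting convexity of the exponential and the structure of the normalizing constants. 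The $\sinh$ functions arise naturally when one symmetrizes the privacy-loss distribution and evaluates the moment generating function exactly at order $\alpha$.

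\textbf{Main obstacle.} I expect the quadratic term and the $\eps$-DP claim to be routine applications of the bounded-privacy-loss argument. The hard part will be deriving the tight $\sinh$-based bound, since this requires an exact analysis of the R\'enyi moment of the exponential-mechanism privacy loss rather than a worst-case pointwise bound; one must track how the partition-function ratios interact with the per-outcome exponential weights and show the resulting moment collapses into the stated $\sinh$ expression. Since both bounds hold, taking their $\min$ gives the final statement; for large $\alpha$ the $\sinh$ term is the binding one, while for small $\alpha$ the quadratic term dominates. Because the theorem is cited directly from \cite{bun2016concentrated}, the cleanest route is to reduce to their existing computation and verify that our sensitivity normalization matches their assumptions.
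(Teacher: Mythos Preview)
The paper does not actually prove this theorem: it is stated with attribution to \cite{bun2016concentrated} and used as a black box in the subsequent privacy analyses of \cref{alg:joint} and \cref{alg:rnm-find-k}, with no proof or even a sketch provided. Your proposal is therefore not something to be compared against the paper's own argument, because there is none; rather, you have correctly reconstructed the standard derivation from the cited reference. Your decomposition into (i) the pointwise privacy-loss bound $[-\eps,\eps]$ giving $\eps$-DP, (ii) Hoeffding's lemma for the $\tfrac{\alpha}{2}\eps^2$ term, and (iii) a sharper moment computation for the $\sinh$ term is exactly the route taken in \cite{bun2016concentrated} (see also \cite{mironov2017renyi} for the RDP phrasing), so the proposal is sound and faithful to the source the paper defers to.
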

\begin{proof}
The sensitivity of $d_k$ is 2. Hence, by adding $\gumbel(2/\eps)$ and pick the noisy max, it is equivalent to the exponential mechanism with $\eps$-DP guarantee. The RDP guarantee follows from the proof of Proposition 3.3 in \citep{bun2016concentrated}. 
\end{proof}

\begin{corollary}[not used but may be of independent interest]
    For RNM with monotonic utility function (e.g., treat each teacher as one data point) and adding $\gumbel(1/\eps)$, then RNM is $\eps$-DP and 
    $\eps_{EM}(\alpha)$-RDP s.t.
    \begin{align*}
        \eps_{EM}(\alpha) := 
        \frac{1}{\alpha-1} \log \left(
        \frac{ e^{-(\alpha-1)t^*} (e^{\alpha \eps}-1) - e^{-\alpha t^*}(e^{\alpha \eps} - e^{-\eps}) }{e^\eps - 1} \right)
    \end{align*}
    where $t^* = \log \left( \frac{\alpha (e^{\alpha \eps} - e^{\eps})}{ (\alpha-1)(e^{\alpha \eps}-1) } \right)$. 
\end{corollary}
\begin{remark}
    We can further prove that the second RDP bound is further upper bounded by $1/8 \alpha \eps^2$. 
\end{remark}
\begin{proof}
    The sensitivity of $d_k$ is 1. 
    Hence, by adding $\gumbel(2/\eps)$ and pick the noisy max, it is equivalent to the exponential mechanism with $\eps$-DP guarantee. The RDP guarantee follows from the proof of Proposition 3.3 in \citep{bun2016concentrated}. 

    For the second part of the theorem, 
    RDP guarantee, denote $P := \M(D), Q := \M(D')$. By the bounded range property of Exponential Mechanism \citep{durfee2019practical}, we know that there exists $t \in [0, \eps]$ s.t. 
    \begin{align*}
        \log \left( P(x)/Q(x) \right) \in [-t, \eps - t]
    \end{align*}
    Now we want to bound $\E_{x \sim Q} \left[ \left( \frac{P(x)}{Q(x)} \right)^\alpha \right] = e^{ (\alpha-1) D_{\alpha}(P \| Q)}$. 
    Define a randomized function $A(x) \in \{ e^{-t}, e^{\eps-t} \}$ s.t. $\E_A [A(x)] = P(x) / Q(x)$. 
    \begin{align*}
        \E_{x \sim Q} \left[ \left( \frac{P(x)}{Q(x)} \right)^\alpha \right] 
        &= \E_{x \sim Q} \left[ \left( \E_A [A(x)] \right)^\alpha \right] \\
        &\le \E_{x \sim Q, A} \left[ \left( A(x) \right)^\alpha \right]
    \end{align*}
    Since $\E_{x \sim Q, A} [A(x)] = 1$, 
    we know that $\Pr_{x \sim Q, A} [A(x) = e^{\eps-t}] = \frac{1-e^{-t}}{ e^{\eps-t}-e^{-t} }$ and $\Pr_{x \sim Q, A} [A(x) = e^{-t}] = \frac{e^{\eps-t}-1}{ e^{\eps-t}-e^{-t} }$. Hence, we have 
    \begin{align*}
        \E_{x \sim Q, A} \left[ \left( A(x) \right)^\alpha \right] 
        &= \frac{e^{\alpha (\eps-t)}(1-e^{-t})}{ e^{\eps-t}-e^{-t} } + \frac{e^{-\alpha t}(e^{\eps-t}-1)}{ e^{\eps-t}-e^{-t} } \\
        &= \frac{ e^{\alpha \eps - (\alpha-1)t} - e^{-(\alpha-1)t} - ( e^{\alpha(\eps-t)} - e^{\eps-\alpha t} ) }{ e^{\eps}-1 }
    \end{align*}
    Let $f(t) := e^{\alpha \eps - (\alpha-1)t} - e^{-(\alpha-1)t} - ( e^{\alpha(\eps-t)} - e^{\eps-\alpha t} )$. 
    $f'(t) = e^{\alpha \eps - (\alpha-1)t} (- (\alpha-1)) - e^{-(\alpha-1)t} (-(\alpha-1)) - ( e^{\alpha(\eps-t)}(-\alpha) - e^{\eps-\alpha t}(-\alpha) )$. We can see that the $t^*$ s.t. $f'(t^*) = 0$ is 
    \begin{align*}
        t^* =  \log \left( \frac{\alpha (e^{\alpha \eps} - e^{\eps})}{ (\alpha-1)(e^{\alpha \eps}-1) } \right)
    \end{align*}
    and $f$ is maximized at $t^*$. Hence, we have 
    \begin{align}
        \E_{x \sim Q} \left[ \left( \frac{P(x)}{Q(x)} \right)^\alpha \right] 
        &\le f(t^*) \\
        &= \frac{ e^{- (\alpha-1)t^*} (e^{\alpha \eps} - 1) - e^{-\alpha t} ( e^{\alpha \eps} - e^{\eps} ) }{ e^{\eps}-1 }
    \end{align}
\end{proof}

\begin{theorem}
    Algorithm \ref{alg:ptr} is $\delta$-approximate $\frac{\alpha}{2 \sigma^2}$-RDP. 
\end{theorem}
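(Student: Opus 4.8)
The plan is to read Algorithm~\ref{alg:ptr} as the Gaussian-noise instantiation of the propose-test-release template and to establish the two ingredients such a template needs: a bounded-sensitivity test whose noise supplies the claimed R\'enyi cost, and a threshold calibration that caps the probability of an ``unsafe'' release at $\delta$. First I would pin down the sensitivity of the tested quantity. A neighboring change adds or removes a single exemplar, which alters exactly one of the output sentences and hence changes the count of any individual token by at most $1$; consequently each order statistic $\bH_{(k)}$ moves by at most $1$, and the gap $d_k = \bH_{(k)} - \bH_{(k+1)}$ has global sensitivity $2$. Since $x \mapsto \max(2,x)$ is $1$-Lipschitz, the tested statistic $\max(2,d_k)$ also has sensitivity $2$. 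The random variable $\widehat d_k$ is therefore $\max(2,d_k)$ perturbed by $\N(0,4\sigma^2)$ and then shifted by the data-independent constant $\Phi(1-\delta;0,2\sigma)$; reading off the Gaussian-mechanism R\'enyi bound with sensitivity $\Delta = 2$ and noise standard deviation $2\sigma$ gives exactly $\frac{\alpha \Delta^2}{2(2\sigma)^2} = \frac{\alpha}{2\sigma^2}$, which is the target rate.

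The core of the argument is then a case split on whether the exact top-$k$ sets of the two neighbors coincide. I would use the stability fact recorded in the text: whenever $d_k > 2$ the top-$k$ index set is identical across all neighbors. Contrapositively, if the two top-$k$ sets differ then both neighbors must have gap at most $2$. In the agreeing case the released value upon passing is one and the same set $T$ for $D$ and $D'$, so the whole mechanism is the \emph{same} deterministic post-processing map (``pass $\mapsto T$, fail $\mapsto \bot$'') applied to the Gaussian test on either dataset; the data-processing inequality for R\'enyi divergence (the RDP analogue of Lemma~\ref{lemma:post-processing}) then bounds the divergence by the $\frac{\alpha}{2\sigma^2}$ of the first step, with no recourse to $\delta$ and no conditioning needed.

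In the disagreeing case both gaps are at most $2$, so $\max(2,d_k) = \max(2,d_k') = 2$ and the test passes on either side precisely when the noise exceeds its own $(1-\delta)$-quantile, i.e.\ with probability exactly $\delta$. Here I would invoke the approximate-RDP definition (Definition~\ref{def:approximate-RDP}) by taking $E$ and $E'$ to be the failure events $\{\M(D)=\bot\}$ and $\{\M(D')=\bot\}$, each of probability $1-\delta$; conditioned on these events both mechanisms deterministically output $\bot$, so the conditional R\'enyi divergence is $0 \le \frac{\alpha}{2\sigma^2}$. Assembling the two cases yields $\delta$-approximate $\frac{\alpha}{2\sigma^2}$-RDP for every ordered neighbor pair, which is the claim.

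I expect the main obstacle to be the disagreeing, asymmetric regime: the naive attempt to condition away ``pass-while-unstable'' symmetrically on both sides breaks down, because one neighbor may satisfy $d_k>2$ (and is allowed to release) while the other has $d_k'\le 2$, producing a support mismatch and an infinite divergence. The right move---and the crux of the proof---is to split on equality of the top-$k$ sets rather than on the individual stability flags, which guarantees that whenever a release is possible on one side the identical set is the correct release on the other, so the two supports always match and the $\delta$ slack is only ever spent in the genuinely unstable regime, where every pass probability is already pinned to $\delta$ by the quantile threshold.
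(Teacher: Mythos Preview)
Your argument is correct and follows the same propose-test-release template as the paper's proof: the Gaussian test on $\max(2,d_k)$ with sensitivity $2$ and noise scale $2\sigma$ supplies the $\frac{\alpha}{2\sigma^2}$ R\'enyi cost, and the quantile threshold pins the unsafe-release probability to $\delta$. Your case split on agreement of the top-$k$ sets across the neighbor pair is a slightly more careful decomposition than the paper's single-dataset split on $d_k \lessgtr 2$, but the substance is identical.
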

\begin{proof}
    Releasing the noisy threshold $\widehat d_k$ is $\frac{\alpha}{2 \sigma^2}$-RDP. 

    If $d_k > 2$, then releasing the exact top-$k$ tokens has no privacy cost, as its local sensitivity is 0. 

    If $d_k \le 2$, then if $\widehat d_k \le 2$, the program terminates and there's no privacy cost. 

    If $d_k \le 2$, the failure probability
    \begin{align*}
        \Pr[\widehat d_k > 2] 
        &= \Pr[ \max(2, d_k) + \N(0, 4\sigma^2) - \Phi(1-\delta; 0, 2\sigma) > 2 ] \\
        &= \Pr[ 2 + \N(0, 4\sigma^2) - \Phi(1-\delta; 0, 2\sigma) > 2 ] \\
        &= \Pr[ \N(0, 4\sigma^2) - \Phi(1-\delta; 0, 2\sigma) > 0] \\
        &= \delta
    \end{align*}
\end{proof}

\subsection{Privacy amplification by subsampling for approximate RDP}

\begin{theorem}
    If $\M$ is $\delta$-approximate $\eps_\M(\alpha)$-RDP, then $\M \circ \poisson$ with subsampling rate $q$ is $\delta q$-approximate $\eps_{\M \circ \poisson}(\alpha)$-RDP, where $\eps_{\M \circ \poisson}(\alpha)$ is the tightest possible amplification bound for any mechanism that is $\eps_\M(\alpha)$-RDP with subsampling rate $\frac{q(1-\delta)}{1-q\delta}$. 
\end{theorem}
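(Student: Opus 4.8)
The plan is to reduce the statement to the ordinary (non-approximate) RDP subsampling amplification bound by peeling off the failure events in a way that both controls their total probability and leaves behind a cleanly subsampled mechanism at a \emph{modified} rate. I would work with the add/remove adjacency and fix a neighboring pair $D' = D \cup \{x^*\}$; by the symmetry of Poisson subsampling it suffices to bound the conditional divergence in both directions for this single-point difference.

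First I would write out the Poisson decomposition of the subsampled mechanism. Writing $S_0 \sim \poisson_q(D)$ for a subsample of the common points, we have
\[
\M\circ\poisson_q(D) = \E_{S_0}\!\left[\M(S_0)\right], \qquad
\M\circ\poisson_q(D') = (1-q)\,\E_{S_0}\!\left[\M(S_0)\right] + q\,\E_{S_0}\!\left[\M(S_0\cup\{x^*\})\right],
\]
since $x^*$ enters the subsample independently with probability $q$. The crucial structural fact is that the ``exclude'' branch of $\M\circ\poisson_q(D')$ is \emph{identical} to $\M\circ\poisson_q(D)$, so no privacy is lost there; all the loss is concentrated in the ``include'' branch, which carries mixture weight $q$.

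Next I would invoke the approximate-RDP hypothesis pointwise. For each realized $S_0$ the pair $(S_0,\, S_0\cup\{x^*\})$ is adjacent, so \cref{def:approximate-RDP} supplies events $E_{S_0}, E'_{S_0}$ of probability at least $1-\delta$ on which the conditional Rényi divergence is at most $\eps_\M(\alpha)$. I would assemble these into global events $E, E'$ for the subsampled mechanism, conditioning only inside the include branch (the exclude branch needs no conditioning). Because the failure mass lives only in the include branch, which has weight $q$, the total removed mass is at most $\delta q$; this is exactly the $\delta q$-approximate guarantee. The point is then that, after renormalizing by the surviving mass $1-q\delta$, the conditioned output is again a Poisson-subsampled mixture whose new include-weight is
\[
q' = \frac{q(1-\delta)}{1-q\delta},
\]
obtained because the include branch retains only its $1-\delta$ good fraction (we discard precisely the $\delta$ mass needed to invoke the hypothesis) while the exclude branch is untouched, and the two renormalized weights $\tfrac{q(1-\delta)}{1-q\delta}$ and $\tfrac{1-q}{1-q\delta}$ sum to one. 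On these conditioned distributions the base mechanism satisfies ordinary $\eps_\M(\alpha)$-RDP, so the conditional divergence is bounded by the tightest subsampling-amplification bound at rate $q'$, namely $\eps_{\M\circ\poisson}(\alpha)$, which I would quote as a black box for any $\eps_\M(\alpha)$-RDP mechanism.

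The main obstacle I anticipate is the bookkeeping for the conditioning events: the approximate-RDP definition conditions \emph{both} output distributions (on $E$ and on $E'$), and these must be glued consistently across the random $S_0$ and across both divergence directions, all while leaving the exclude branch exactly intact so that the renormalization produces precisely $q' = \tfrac{q(1-\delta)}{1-q\delta}$ rather than some looser rate. Making this reweighting come out exactly — rather than merely up to constants — is the technical heart, and is what ties the $\delta q$ failure budget to the reduced subsampling rate.
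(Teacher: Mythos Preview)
Your proposal is correct and follows essentially the same route as the paper: decompose the Poisson-subsampled outputs over the include/exclude branches for the differing point, invoke the approximate-RDP events $E_{S_0}, E'_{S_0}$ pointwise inside the include branch (on both sides, as you rightly flag), peel off at most $q\delta$ mass, and observe that the renormalized conditional distributions are exactly a $(1-q')/q'$ mixture with $q' = \frac{q(1-\delta)}{1-q\delta}$, to which the standard RDP amplification bound applies as a black box. The only bookkeeping you call out as the ``main obstacle''---artificially splitting the smaller-dataset distribution as $(1-q)\mu_S + q\mu_S$ so that its $q$-part can be conditioned on $E_{S_0}$ symmetrically---is precisely what the paper does, and once that is in place the renormalization comes out exactly.
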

\begin{proof}
Consider $D := D' \cup \{z\}$, $D, D'$ are neighboring datasets. 
Denote $S \subseteq D'$, and let $\gamma_S$ the probability of sampling $S$. 
Denote $\mu_S := \M(S)$. 

\begin{align}
    \M( \poisson(D') ) &= \sum_{S \subseteq D'} \gamma_S \mu_S \\
    \M( \poisson(D) ) &= \sum_{S \subseteq D'} \gamma_S \left( (1-q) \mu_S + q\mu_{S \cup z} \right) \\
    &= (1-q) \sum_{S \subseteq D'} \gamma_S \mu_S + q \sum_{S \subseteq D'} \gamma_S \mu_{S \cup z}
\end{align}

For any pair of $S, S \cup z$, denote event $E_{S}, E_{S \cup z}$ s.t. $D_\alpha(\mu_S | E_{S} \| \mu_{S \cup z} | E_{S \cup z}) \le \eps_\M(\alpha)$ and $\Pr[E_{S}] \geq 1-\delta$ and $\Pr[E_{S \cup z}] \geq 1-\delta$. Hence, we can rewrite $\M( \poisson(D') )$ and $\M( \poisson(D) )$ as 
\begin{align*}
    \M( \poisson(D') ) 
    &= (1-q) \sum_{S \subseteq D'} \gamma_S \mu_S + q \sum_{S \subseteq D'} \gamma_S \mu_S \\ 
    &= (1-q) \sum_{S \subseteq D'} \gamma_S \mu_S + q \sum_{S \subseteq D'} \gamma_S \left( (1-\delta) \mu_S|E_S + \delta \mu_S|\bar E_S \right) \\
    &= (1-q) \sum_{S \subseteq D'} \gamma_S \mu_S + q(1-\delta) \sum_{S \subseteq D'} \gamma_S \mu_S|E_S + q \delta \sum_{S \subseteq D'} \gamma_S \mu_S|\bar E_S
\end{align*}

\begin{align*}
    \M( \poisson(D) ) &= (1-q) \sum_{S \subseteq D'} \gamma_S \mu_S + q \sum_{S \subseteq D'} \gamma_S \mu_{S \cup z} \\
    &= (1-q) \sum_{S \subseteq D'} \gamma_S \mu_S + q(1-\delta) \sum_{S \subseteq D'} \gamma_S \mu_{S \cup z}|E_{S\cup z} + q \delta \sum_{S \subseteq D'} \gamma_S \mu_{S \cup z}|\bar E_{S \cup z}
\end{align*}

Hence, there exists event $E_D, E_{D'}$ s.t. $\Pr[E_{D}] \geq 1-q \delta$ and $\Pr[E_{D'}] \geq 1-q \delta$, and 
\begin{align*}
    \M( \poisson(D') )|E_D &= (1-q) \sum_{S \subseteq D'} \gamma_S \mu_S + q(1-\delta) \sum_{S \subseteq D'} \gamma_S \mu_S|E_S \\
    \M( \poisson(D) )|E_{D'} &= (1-q) \sum_{S \subseteq D'} \gamma_S \mu_S + q(1-\delta) \sum_{S \subseteq D'} \gamma_S \mu_{S \cup z}|E_{S\cup z}
\end{align*}
Hence
\begin{align}
    D_\alpha \left( \M( \poisson(D) )|E_D~\|~\M( \poisson(D') )|E_{D'} \right) 
\end{align}
has privacy amplification with subsampling rate $\frac{q(1-\delta)}{1-q\delta}$. 

\end{proof}

\newpage

\subsection{Privacy Analysis for Limited Domain in Renyi DP}

\begin{theorem}
    Limited domain that releases $k$ elements satisfy $(k \eps_{EM}(\alpha), \delta)$-approximate RDP. 
\end{theorem}

\bibliographystyle{plainnat}
\bibliography{ref}

%%%%%%%%%%%%%%%%%%%%%%%%%%%%%%%%%%%%%%%%%%%%%%%%%%%%%%%%%%%%

\newpage
\onecolumn

\appendix

\end{document}